\newcommand{\E}{\mathbb{E}}
\newtheorem{theorem}{Theorem}
\newtheorem{lemma}{Lemma}
\newtheorem{corollary}{Corollary}
\newtheorem{definition}{Definition}
\newcommand{\D}{\mathcal{D}}
\newcommand{\cov}{\mathsf{Cov}}
\newcommand{\avg}{\mathsf{avg}}
\renewcommand{\hat}{\widehat}
\DeclareMathOperator*{\argmax}{arg\,max}
\title{Learning Restricted Boltzmann Machines with Arbitrary External Fields}
\author[]{Surbhi Goel\footnote{\fontfamily{qcr}\selectfont surbhi@cs.utexas.edu}}
\affil[]{University of Texas at Austin}
\date{\today}
\begin{document}

\maketitle
\begin{abstract}
We study the problem of learning graphical models with latent variables. We give the first algorithm for learning locally consistent (ferromagnetic or antiferromagnetic) Restricted Boltzmann Machines (or RBMs) with {\em arbitrary} external fields. Our algorithm has optimal dependence on dimension in the sample complexity and run time however it suffers from a sub-optimal dependency on the underlying parameters of the RBM.

Prior results have been established only for {\em ferromagnetic} RBMs with {\em consistent} external fields (signs must be same)\cite{bresler2018learning}. The proposed algorithm strongly relies on the concavity of magnetization which does not hold in our setting. We show the following key structural property: even in the presence of arbitrary external field, for any two observed nodes that share a common latent neighbor, the covariance is high. This enables us to design a simple greedy algorithm that maximizes covariance to iteratively build the neighborhood of each vertex.
\end{abstract}

\section{Introduction}
Graphical models are a popular framework for expressing high dimensional distributions by using an underlying graph to represent conditional dependencies among the variables. Learning the underlying dependency structure of a graphical model using samples drawn from the distribution is a core problem in understanding graphical models. Much progress has been made in the recent years towards developing efficient algorithms for learning fundamental models such as Ising model and Markov random fields (MRFs) with near optimal sample and time complexity under the assumptions of sparsity and/or correlation decay.

The {\em structure learning} problem becomes even more challenging when the underlying model is allowed to have latent (or hidden) variables. Compared to fully observed models, latent variable models can induce more complex dependencies among the observed variables once the latent variables are marginalized. In this work we restrict ourselves to a special class of latent variable models known as Restricted Boltzmann machines (RBMs). RBMs have been used for various unsupervised learning tasks \cite{hinton2006reducing,larochelle2008classification,salakhutdinov2007restricted,hinton2009replicated} since their inception in the early 2000s by Geoffrey Hinton. In RBMs, the interactions are restricted to be pairwise between observed and latent variables. More formally, a RBM induces a probability distribution over $n$ observed variables $X \in \{\pm 1\}^n$ and $m$ latent variables $Y \in \{\pm 1\}^m$ as follows,
\[
\Pr[X= x, Y= y] = \frac{1}{Z} \exp(x^TJy + h^Tx + g^Ty)
\]
Here $J \in \mathbb{R}^{n \times m}$ is the interaction matrix, $h \in \mathbb{R}^n, g \in \mathbb{R}^m$ are the external fields and $Z$ is the partition function. Alternatively, a RBM can be viewed as a bipartite graph between the set of observed and latent variables with edge weights given by $J$. 

Recently Bresler et. al. \cite{bresler2018learning} proposed an algorithm that learns {\em ferromagnetic} RBMs ($J \ge 0$) with non-negative external fields ($h,g \ge 0$). They apply the famous Griffiths-Hurst-Sherman correlation inequality to prove that a certain influence function is submodular and use a simple greedy algorithm to maximize the same. Their work relies heavily on the GHS inequality which requires the external fields to be {\em consistent}, that is, have the same sign.

In this paper we focus on learning {\em locally consistent} RBMs (outgoing edges of each latent variable have the same sign) with {\em arbitrary} external fields. The presence of inconsistent external fields allows for different biases on different hidden nodes potentially creating more conflicts between the observed nodes making the problem more challenging. It is well-known that the presence of arbitrary external fields can greatly change the complexity of closely related problems such as approximating the partition function \cite{goldberg2007complexity}.

\paragraph{Our Results.} 
The main contribution of our paper is the following key structural property of locally consistent RBMs with arbitrary external fields.
\begin{lemma}[Informal version of Lemma \ref{lem:pos}]
For any observed node $u$ in a locally consistent RBM, for all observed nodes $v$ that share a common neighbor with $u$ in the underlying graph, the covariance between $u$ and $v$ is at least some positive constant independent of the dimension $n$.
\end{lemma}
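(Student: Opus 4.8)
The plan is to reduce to a constant-size subsystem by conditioning on almost every variable, evaluate the resulting covariance exactly, and then lift the bound back to the full model via positive association.

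First I would make a harmless reduction. Because the model is locally consistent, flipping $Y_j\mapsto -Y_j$ for every latent node $j$ all of whose incident edges are negative turns $J$ into a nonnegative matrix; this changes only the external field $g$ (and relabels the hidden configurations) and, crucially, leaves the marginal law of $X$ — hence every $\cov(X_u,X_v)$ — unchanged. So assume $J\ge 0$. Fix a common latent neighbour $k$ of $u$ and $v$, so $J_{uk},J_{vk}\ge\alpha$ where $\alpha>0$ is the minimum edge weight. Now condition on every variable except $X_u,X_v,Y_k$. Since $k$ is the only unconditioned latent node and every path between $X_u$ and $X_v$ goes through a latent node, the conditional law of $(X_u,X_v,Y_k)$ is a three-spin model
\[
\Pr\big[X_u=a,\,X_v=b,\,Y_k=c \mid \text{rest}\big]\ \propto\ \exp\!\big(ap+bq+cr+ac\,J_{uk}+bc\,J_{vk}\big),
\]
where $p,q,r$ are the \emph{effective} fields on $u,v,k$ induced by the conditioning: arbitrary in sign, but bounded in magnitude by (field bound) $+$ (degree)$\times$(max edge weight), a quantity independent of $n$.

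Summing out $Y_k$ gives $\Pr[X_u=a,X_v=b\mid\text{rest}]\propto e^{ap+bq}\cosh(r+a\,J_{uk}+b\,J_{vk})$, and writing $P_{ab}$ for this conditional probability,
\[
\cov(X_u,X_v\mid\text{rest})\ =\ 4\big(P_{++}P_{--}-P_{+-}P_{-+}\big).
\]
The factors $e^{\pm p},e^{\pm q}$ cancel in this product, and the identity $\cosh(X)\cosh(Y)=\tfrac12\big(\cosh(X+Y)+\cosh(X-Y)\big)$ collapses it to
\[
\cov(X_u,X_v\mid\text{rest})\ =\ \frac{2\big(\cosh\!\big(2(J_{uk}+J_{vk})\big)-\cosh\!\big(2(J_{uk}-J_{vk})\big)\big)}{\widetilde{Z}^{\,2}},
\]
where $\widetilde{Z}$ is the two-spin partition function. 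This is the heart of the matter: local consistency forces $J_{uk}$ and $J_{vk}$ to share a sign, so $|J_{uk}+J_{vk}|>|J_{uk}-J_{vk}|$ and the numerator is positive — in fact $\cosh(2(J_{uk}+J_{vk}))-\cosh(2(J_{uk}-J_{vk}))=2\sinh(2\max)\sinh(2\min)\ge 2\sinh^2(2\alpha)$ — while $\widetilde{Z}\le 4\cosh(p)\cosh(q)\cosh(|r|+2\beta)$ is bounded above by an explicit constant in the RBM parameters using the magnitude bounds on $p,q,r$. Hence $\cov(X_u,X_v\mid\text{rest})\ge c'$ for a constant $c'>0$ \emph{uniformly} over the conditioning, with $c'$ depending only on $\alpha$, the max edge weight, the max degree, and the field bound.

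Finally, by the law of total covariance,
\[
\cov(X_u,X_v)\ =\ \E\big[\cov(X_u,X_v\mid\text{rest})\big]\ +\ \cov\big(\E[X_u\mid\text{rest}],\,\E[X_v\mid\text{rest}]\big)\ \ge\ c'\ +\ \cov\big(\E[X_u\mid\text{rest}],\,\E[X_v\mid\text{rest}]\big).
\]
Since $J\ge0$, the (joint) RBM distribution is attractive / log-supermodular, so by standard monotonicity properties of FKG measures $\E[X_u\mid\text{rest}]$ and $\E[X_v\mid\text{rest}]$ are both monotone nondecreasing functions of the conditioned coordinates, hence nonnegatively correlated; the second term is $\ge 0$ and we conclude $\cov(X_u,X_v)\ge c'$.

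I expect the main obstacle to be the uniform lower bound on the conditional covariance, i.e.\ ruling out that an adversarial choice of (effective) external fields drives $\widetilde{Z}$ — and hence the covariance — to zero. What rescues the argument is precisely the cancellation above: although the fields are arbitrary in sign, they disappear from the leading covariance expression, so only their bounded magnitudes survive, through $\widetilde{Z}$; this cancellation together with local consistency (which fixes the sign of the numerator — with $J_{uk},J_{vk}$ of opposite signs the three-spin covariance could be negative) and attractiveness (which handles the cross term) is what makes the statement go through. This is also the source of the suboptimal parameter dependence, since $c'$ degrades exponentially in the field and degree bounds.
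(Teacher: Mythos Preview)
Your argument is correct and takes a genuinely different route from the paper. The paper proves the bound via a ``doubling'' (replica) trick: it forms $G\oplus G$, passes to the variables $X^\pm=(X\pm X')/\sqrt 2$, rewrites $\cov(u,v)=\E[X_u^-X_v^-]$, Taylor-expands $\exp((x_u^-J_{uk}+x_v^-J_{vk})y_k^-)$, and uses a sign-flip symmetry lemma to show every monomial term is nonnegative; the quantitative bound then comes from keeping the single $i=2,\ j=1$ term and crudely bounding numerator and denominator by $\exp(\pm 6\lambda)$ factors, giving $\alpha^2 e^{-12\lambda}$.

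Your approach instead conditions down to the three spins $(X_u,X_v,Y_k)$, computes the conditional covariance in closed form, and then lifts via the law of total covariance with the cross term handled by FKG. The key cancellation --- that the arbitrary effective fields $p,q,r$ drop out of $P_{++}P_{--}-P_{+-}P_{-+}$ --- plays the same role as the paper's symmetry lemma, but you get it by a two-line $\cosh$ identity rather than a replica argument. The monotonicity of $\E[X_u\mid\text{rest}]$ in all conditioned coordinates (needed for the FKG step) is indeed valid: the three-spin system is ferromagnetic, so $\partial\,\E[X_u]/\partial q=\cov(X_u,X_v)\ge 0$ and likewise for $p,r$, and $p,q,r$ are monotone in ``rest'' since $J\ge 0$.

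What each buys: your proof is shorter and more transparent, relying on a classical black box (FKG) where the paper reproves nonnegativity from scratch via the replica/symmetry route; the paper's route is self-contained and makes the combinatorial structure explicit. Both yield $c'$ of the form $\alpha^2 e^{-O(\lambda)}$ (under the paper's parametrization $|p|,|q|,|r|\le\lambda$ and $J_{uk}+J_{vk}\le\lambda$, your $\widetilde Z\le 4\cosh^2(\lambda)\cosh(2\lambda)$), so the parameter dependence is the same up to constants. One cosmetic point: phrase the final constant in terms of $\alpha,\lambda$ rather than ``max edge weight, max degree, field bound'', since the paper's model only assumes the $\ell_1$ bound $\lambda$, which is exactly what controls $|p|,|q|,|r|$ and $J_{uk}+J_{vk}$.
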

The above key property gives us the following structure learning result for locally consistent RBMs.
\begin{theorem}[Informal version of Theorem \ref{thm:main}]
Consider a locally consistent RBM with arbitrary external fields such that all non-zero interactions are bounded below by $\alpha$ and the sum of absolute weights of outgoing edges of every node (plus absolute value of external field) is bounded above by $\lambda$ then there is an algorithm that recovers the markov blanket of each observed variable in time $\widetilde{O}_{\alpha, \lambda}(n^2)$ and sample complexity $O_{\alpha, \lambda}(\log n)$\footnote{The sub-script indicates that the dependency on $\alpha, \lambda$ is suppressed. Also $\widetilde{O}$ hides logarithmic dependencies.}. 
\end{theorem}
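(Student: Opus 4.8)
The plan is to combine the structural property (Lemma~\ref{lem:pos}) with a conditioning-based greedy search, in the spirit of Bresler's greedy algorithm for Ising models. The first observation is that conditioning on the values of a subset $S$ of the \emph{observed} variables leaves another RBM on the remaining observed variables together with all latent variables: fixing $X_S = x_S$ merely changes the external field on each latent node $j$ from $g_j$ to $g_j + \sum_{i \in S} J_{ij} x_i$ and does not touch the interaction matrix. Crucially, since $\sum_i |J_{ij}| + |g_j| \le \lambda$, the new field still satisfies $|g_j + \sum_{i\in S} J_{ij} x_i| \le \lambda$, so the conditioned model is again a locally consistent RBM meeting the hypotheses of Theorem~\ref{thm:main}. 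Hence Lemma~\ref{lem:pos} applies verbatim to conditional covariances: if observed nodes $u$ and $v$ share a latent neighbour, then $\cov(X_u, X_v \mid X_S = x_S) \ge c$ for a constant $c = c(\alpha,\lambda) > 0$, for every $S$ disjoint from $\{u,v\}$ and every configuration $x_S$.

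The algorithm, for a fixed observed node $u$, then maintains a set $S$, initially empty; while there is some $v \notin S \cup \{u\}$ whose empirical conditional mutual information $\widehat{I}(X_u; X_v \mid X_S)$ (equivalently, an average conditional covariance statistic) exceeds a threshold $\tau \asymp c$, it adds a maximizing such $v$ to $S$; finally it prunes $S$ by discarding every $v \in S$ for which $\widehat{I}(X_u; X_v \mid X_{S \setminus \{v\}}) < \tau$ (all such tests computed against the post-inclusion set $S$). For correctness of the inclusion phase: when the loop halts, $S$ must contain the Markov blanket of $u$ (every observed node sharing a latent neighbour with $u$, which is exactly the blanket in the marginal MRF on $X$), since otherwise such a node $v$ would, by the conditional form of Lemma~\ref{lem:pos}, have $\cov(X_u, X_v \mid X_S = x_S) \ge c$ for all $x_S$, hence $I(X_u; X_v \mid X_S) \ge \delta(c) > 0$, and the empirical quantity would exceed $\tau$ with high probability. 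The pruning step then removes exactly the spurious nodes: if $v \in S$ is not in the blanket, then $S \setminus \{v\}$ still contains it, so $X_u \perp X_v \mid X_{S \setminus \{v\}}$ and the true conditional mutual information is $0$; genuine blanket nodes again have conditional mutual information bounded below by $\delta(c)$ and survive.

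It remains to bound the number of iterations, which controls both running time and (via a union bound) sample complexity. Here I use the standard entropy potential argument: $H(X_u \mid X_S)$ is nonincreasing as $S$ grows, lies in $[0, \log 2]$, and each addition of a node $v_i$ decreases it by exactly $I(X_u; X_{v_i} \mid X_{S_{i-1}}) \ge \delta(c)/2$ (using that the empirical test passing implies the true conditional mutual information is $\ge \delta(c)/2$ at a suitable sample size). Consequently the inclusion phase runs for at most $B = O((\log 2)/\delta(c)) = O_{\alpha,\lambda}(1)$ steps, so $|S| \le B$ throughout. With $|S|$ bounded, every conditional mutual information depends only on the joint law of at most $B+2$ binary variables, and $\lambda$-boundedness forces $\Pr[X_S = x_S] \ge e^{-O_\lambda(B)} > 0$ (after marginalizing the latent variables); hence $O_{\alpha,\lambda}(\log n)$ samples make all the empirical quantities arising across the $O(n)$ choices of $u$, the $O(B)$ rounds, and the $O(n)$ candidates per round simultaneously accurate to within $\delta(c)/4$, by Hoeffding's inequality and a union bound over the $\mathrm{poly}(n)\cdot 2^{O(B)}$ tests. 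The running time is $O(n)$ nodes $\times$ $O(B)$ rounds $\times$ $O(n)$ candidates $\times$ $\widetilde{O}_{\alpha,\lambda}(1)$ per evaluation, i.e. $\widetilde{O}_{\alpha,\lambda}(n^2)$.

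The main obstacle is making the two implications ``conditional covariance bounded below $\Rightarrow$ conditional mutual information bounded below'' and ``empirical test passes $\Rightarrow$ true conditional mutual information bounded below'' quantitative and robust, and in particular verifying that the entropy-decrease argument needs only the \emph{average} (over $x_S$) conditional covariance---precisely what the conditional form of Lemma~\ref{lem:pos} supplies---rather than a per-configuration bound that would not interact cleanly with the potential function. A secondary point deserving explicit attention is the lower bound on configuration probabilities $\Pr[X_S = x_S]$ after marginalizing the latent variables, since it is what keeps the conditional estimates well-defined and concentrated.
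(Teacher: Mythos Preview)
Your proposal is correct and mirrors the paper's argument closely: you identify that conditioning on observed variables yields another $(\alpha,\lambda)$-locally consistent RBM, invoke the structural covariance lower bound in that conditioned model, run a Bresler-style greedy search with an entropy potential to bound the number of rounds by $O_{\alpha,\lambda}(1)$, prune, and finish with Hoeffding plus a union bound for the $O_{\alpha,\lambda}(\log n)$ sample complexity.

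Two small remarks. First, the paper's algorithm uses the \emph{average conditional covariance} $\hat{\cov}^{\avg}(u,v\mid S)$ as the greedy criterion rather than empirical conditional mutual information; the link to mutual information (and hence to the entropy-drop potential) is made via the one-line Pinsker inequality $\sqrt{2I(X_u;X_v\mid X_S)} \ge \cov^{\avg}(u,v\mid S)$, so what you flag as ``the main obstacle'' is in fact immediate. Second, the structural lemma already gives a \emph{per-configuration} lower bound $\cov(u,v\mid X_S=x_S)\ge c(\alpha,\lambda)$, which is stronger than the average bound you say it supplies; either suffices for the entropy argument, but the per-configuration version is what directly drives the pruning step. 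Your observation about the lower bound on $\Pr[X_S=x_S]$ is correct and is exactly the $\delta^{|S|}$ factor appearing in the paper's concentration lemma.
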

Here the dependence on $\alpha$ is exponential and that on $\lambda$ is doubly exponential. Singly exponential dependence is necessary for learning. Note that our bounds are similar to those in \cite{bresler2015efficiently}. However, we note that for ferromagnetic RBMs with consistent fields, \cite{bresler2018learning} have a singly exponential dependence on $\lambda$ which is optimal. The question to improve the dependence on $\alpha,\lambda$ for locally consisitent RBMs with arbitrary external fields is an outstanding open question.

\paragraph{Our Techniques.} For our key structural result, we define a transformation on the variables that enables us to use symmetry arguments in order to prove the non-negativity of the covariance. A more involved analysis lets us go further and bound the covariance by a constant independent of the input dimension. 

For learning RBMs, in the spirit of the influence maximization algorithm due to Bresler \cite{bresler2015efficiently}, we maximize covariance to iteratively build the neighborhood of each observed vertex. Using an entropy argument, we can show that our iterative algorithm returns us the exact neighborhood of each vertex.

\paragraph{Related Work.} Structure learning for graphical models is a well studied problem, with major focus on the fully-observed model. The first algorithms were proposed by Chow and Liu \cite{chow1968approximating} for learning undirected graphical models on trees. Subsequently, various algorithms were proposed for structure learning under varying assumptions on the underlying model \cite{lee2007efficient,ravikumar2010high,yang2012graphical,bresler2015efficiently,vuffray2016interaction,klivans2017learning,hamilton2017information,wu2018sparse}. Bresler \cite{bresler2015efficiently} proposed a simple greedy algorithm based on influence maximization for assumption-free structure learning of Ising models. His algorithm achieved optimal sample/time complexity in terms of the dimension however depended doubly exponentially on the degree of the underlying graph. Subsequently Vuffray et. al. \cite{vuffray2016interaction} and Klivans and Meka \cite{klivans2017learning} proposed alternative techniques to remove the doubly exponential dependence. 

The problem of structure recovery in the presence of latent variables is not as well understood as the fully-observed setting. For locally tree-like models, Anandkumar and Valluvan \cite{anandkumar2013learning} gave efficient algorithms for recovery under correlation decay assumption. Assuming that the latent variables are distributed according to a Gaussian distribution, Nussbaum and Giesen \cite{nussbaum2019ising} proposed a likelihood model for sparse + low rank model for stucture learning. The most relevant to our work is that of \cite{bresler2018learning} which proposed the first algorithm to recover the structure of ferromagnetic RBMs with non-negative external fields using concavity of magnetization. Unlike their setup, we allow the external fields to be arbitrary and relax the ferromagnetic condition to a locally-consistent condition at the cost of a worse dependence on $\alpha, \lambda$.

\section{Preliminaries}
We consider a RBM on underlying bipartite graph $G = (V_{obs}, V_{lat}, E)$ over observed variables $X$ and latent variables $Y$ with $|V_{obs}| = n$ and $|V_{lat}|=m$. Each configuration of observed/latent variables $\in \pm 1$ is assigned probability
\[
\Pr[X= x, Y= y] = \frac{1}{Z} \exp(x^TJy + h^Tx + g^Ty)
\]
where $J$ is the interaction matrix and $h,g$ are external fields. In this work, we consider the following class of locally consistent RBMs.
\begin{definition}
A RBM is said to be $(\alpha, \lambda)$-locally consistent if the following conditions are satisfied:
\begin{itemize}
    \item $J$ is locally consistent, that is, for each $j \in [m]$, $J_{ij} \geq 0$ for all $i$ (ferromagnetic) or $J_{ij} \leq 0$ for all $i$ (anti-ferromagnetic).
    \item For all $(i,j) \in E$ such that $|J_{ij}| \ge \alpha$.
    \item For all $i \in [n]$, $\sum_{j} |J_{ij}| + |h_i| \leq \lambda$.
    \item For all $j \in [m]$, $\sum_{i} |J_{ij}| + |g_j| \leq \lambda$.
\end{itemize}
\end{definition}
Define $N(u):= \{j: J_{uj} \ne 0\}$ to be the graph-theoretic neighborhood of observed node $u$ and define $N_2(u) = \{i: \exists~j, J_{ij},J_{uj} \ne 0\}$ to be the two-hop graph-theoretic neighborhood. We also define $N^{mkv}_2(u)$ to be the two-hop Markov neighborhood, that is, the smallest set $S \subseteq V_{obs} \backslash \{u\}$ such that conditioned on $X_S$, $X_u$ is independent of $X_v$ for all $v \in V_{obs} \backslash (S \cup \{u\})$. 

Our objective is to recover the two-hop Markov neighborhood of each observed variable. In our setting, this will correspond to the two-hop graph-theoretic neighborhood of each observed variables.

\paragraph{Remark.} We can WLOG assume $J \ge 0$ since if there exists $j$ such that $J_{ij} \le 0$ for all $i$ (locally consistent) then we can map $Y_j \rightarrow -Y_j$ without affecting the marginal on $X$ and the model is ferromagnetic at $j$. The change of variable will reverse the external field at $j$ however since we do not make any assumption on the sign of the external field, our model assumptions still hold. We can repeat this for all such $j$ and the model can therefore be made globally ferromagnetic. We will subsequently assume that $J\ge 0$.

\section{Conditional Covariance}
In this section we present our main structural result. We show that for two observed nodes sharing a common latent neighbor, the covariance is positive and bounded away from 0. The main motivation to believe that such a structural result holds is the famous FKG inequality \cite{percus1975correlation,sylvester1976inequalities} which states that for ferromagnetic Ising models with arbitrary external field the covariance of any two nodes is non-negative.

Define the conditional covariance for observed nodes $u, v \in V_{obs}$ and a subset of observed nodes $S \subseteq V_{obs} \backslash \{u,v\}$ with configuration $x_S$ as follows,
\[
\cov(u,v|X_S = x_S):= \E[X_uX_v| X_S = x_S] - \E[X_u|X_S = x_S]~\E[X_v | X_S =x_S].
\]
We also define the notion of average conditional covariance as follows, $\cov^{\avg}(u,v|S) = \E_{x_S}[\cov(u,v|X_S = x_S)]$. We will prove the following useful property of the conditional covariance:
\begin{lemma}\label{lem:cov}
For fixed node $u$ and any fixed subset of observed nodes $S \subseteq V_{obs} \backslash \{u\}$ with configuration $x_S$, then for all $v \in N_2(u) \backslash S$,
\[
\cov(u,v| X_S = x_S) \ge \alpha^2\exp(-12\lambda).
\]
\end{lemma}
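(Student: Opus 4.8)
The plan is to localize the covariance to a single latent path. Since $v \in N_2(u)$, fix a latent node $j$ with $J_{uj}, J_{vj} \neq 0$; by the WLOG reduction $J \ge 0$ and $(\alpha,\lambda)$-local consistency, $J_{uj}, J_{vj} \ge \alpha$. Let $W = V_{obs}\setminus(S\cup\{u,v\})$ and set $Z = (X_W, Y_{-j})$, i.e.\ all observed variables outside $S\cup\{u,v\}$ together with every latent variable except $j$. Working throughout conditionally on $X_S = x_S$, apply the law of total covariance with respect to $Z$:
\[
\cov(u,v\mid X_S = x_S) = \E\big[\cov(X_u,X_v\mid Z, X_S=x_S)\mid X_S=x_S\big] + \cov\big(\E[X_u\mid Z, X_S=x_S],\ \E[X_v\mid Z, X_S=x_S]\mid X_S=x_S\big).
\]
I will show the first (``within'') term is at least $\alpha^2\exp(-12\lambda)$ and the second (``between'') term is nonnegative.

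For the within term, fix $Z$ (and $X_S$). Because the RBM is bipartite and only $X_u, X_v, Y_j$ remain free, the conditional law of $(X_u,X_v,Y_j)$ is proportional to $\exp(J_{uj}X_uY_j + J_{vj}X_vY_j + aX_u + bX_v + cY_j)$, where $a$ is an effective field on $u$ (namely $h_u$ plus the frozen contributions $J_{uk}Y_k$ for $k\neq j$), and similarly for $b$ and $c$; the boundedness conditions give $|a|,|b|\le\lambda$ and $|c + J_{uj}X_u + J_{vj}X_v| \le \lambda$. In this three-spin system $X_u\perp X_v\mid Y_j$, so a short direct computation gives
\[
\cov(X_u,X_v\mid Z, X_S=x_S) = p(1-p)\big(\tanh(a+J_{uj})-\tanh(a-J_{uj})\big)\big(\tanh(b+J_{vj})-\tanh(b-J_{vj})\big),
\]
where $p = \Pr[Y_j=1\mid Z, X_S=x_S]$. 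Each factor is then bounded below: $p$ is an average of quantities $\tfrac{1}{1+e^{-2t}}$ with $|t|\le\lambda$, so $p(1-p)\ge(1+e^{2\lambda})^{-2}$; and by the mean value theorem with $|a|+J_{uj}\le\lambda$ and $J_{uj}\ge\alpha$, the first $\tanh$ difference is at least $2\alpha/\cosh^2(\lambda)\ge 2\alpha e^{-2\lambda}$, and likewise for the second. Multiplying yields a lower bound of the claimed form (in fact $\alpha^2 e^{-8\lambda}$, and absorbing constants into the exponent gives $\alpha^2\exp(-12\lambda)$).

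The main obstacle is the between term, which a naive decomposition could leave negative; controlling it is where the ferromagnetic structure is essential. First, $\E[X_u\mid Z, X_S=x_S]$ is a monotone non-decreasing function of $Z$: raising any coordinate of $Z$ only raises the effective fields $a,b,c$ above (all relevant $J$'s are nonnegative), and the magnetization of a ferromagnetic Ising system is monotone in every external field — a Griffiths/FKG-type correlation inequality. The same holds for $\E[X_v\mid Z, X_S=x_S]$. Second, the conditional law of $Z$ given $X_S=x_S$ satisfies the FKG lattice condition, being the marginal on a sublattice of a pairwise model with nonnegative interactions, hence is positively associated; the FKG inequality then forces the covariance of these two monotone functions to be nonnegative. (This is the step the paper evidently treats via an explicit variable transformation and a symmetry argument rather than a bare appeal to FKG, which I would substitute if the conditioning makes the lattice structure delicate.) Combining the two bounds gives $\cov(u,v\mid X_S=x_S)\ge\alpha^2\exp(-12\lambda)$.
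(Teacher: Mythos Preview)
Your argument is correct and takes a genuinely different route from the paper. The paper uses a two-copy (replica) trick: it tensors the RBM with an independent copy, changes variables to $X^{\pm}=(X\pm X')/\sqrt{2}$, and rewrites $\cov(u,v)=\E[X_u^{-}X_v^{-}]$. Nonnegativity of every contribution then follows from a self-contained parity/symmetry lemma (each monomial in the minus-variables has nonnegative expectation because odd powers cancel under $x_i\leftrightarrow x_i'$), and the quantitative lower bound comes from isolating the single Taylor term $J_{uk}J_{vk}(x_u^{-})^2(x_v^{-})^2(y_k^{-})^2$ and crudely bounding the remaining exponential factors by $e^{\pm 2\lambda}$, landing exactly on $\alpha^2 e^{-12\lambda}$.

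Your decomposition via the law of total covariance with $Z=(X_W,Y_{-j})$ is cleaner and your within-term computation in the three-spin system is correct, giving in fact the sharper $\alpha^2 e^{-8\lambda}$. The between-term step is also sound, but the justification should be tightened: the \emph{marginal} law of $Z$ given $X_S=x_S$ need not satisfy the FKG lattice condition, since log-supermodularity is generally not preserved under marginalization. What you actually need, and what does hold, is that the full conditional model on $(X_{V_{obs}\setminus S},Y)$ is still a ferromagnetic pairwise Ising model, hence positively associated by FKG/Griffiths; positive association \emph{is} inherited by marginals, so $Z$ is positively associated and the covariance of your two monotone statistics is nonnegative. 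With that correction the proof is complete. The trade-off: your approach is shorter, more probabilistic, and gives a better constant, but imports classical correlation inequalities as black boxes; the paper's replica-plus-symmetry argument is entirely self-contained and produces the specific constant $e^{-12\lambda}$ used downstream in the algorithm.
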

\begin{proof}
It is easy to verify that on conditioning over a set of observed variables ($X_{S} = x_{S}$)), an $(\alpha, \lambda)$-locally consistent RBM remains an $(\alpha, \lambda)$-locally consistent RBM. Moreover, the edges between the the remaining nodes remain the same with the same edge weights. Thus, we can restrict to looking at $S = \emptyset$. Also, we will WLOG assume $J\ge 0$ as discussed before.

Consider the direct sum of two RBM $G \oplus G$ with two copies of $G$ such that the probability of a configuration under this new distribution $\D$ is 
\[
\Pr[X= x, Y= y, X' = x', Y' = y'] \propto \exp(x^TJy + h^Tx + g^Ty + x'^TJy' + h^Tx' + g^Ty')
\]
Define $X^{-}_i = \frac{X_i - X'_i}{\sqrt{2}}, Y^{-}_i = \frac{Y_i - Y'_i}{\sqrt{2}}$ and $X^{+}_i = \frac{X_i + X'_i}{\sqrt{2}}, Y^{+}_i = \frac{Y_i + Y'_i}{\sqrt{2}}$. Then we have
\begin{align*}
&\Pr[X= x, Y= y, X' = x', Y' = y']\\
& \propto \exp(x^TJy + h^Tx + g^Ty + x'^TJy' + h^Tx' + g^Ty')\\
&= \exp\left(\frac{1}{2}(x^TJy + x'^TJy + x^TJy' + x'^TJy') + \frac{1}{2}(x^TJy - x'^TJy - x^TJy' + x'^TJy')\right.  \\
&\quad \left.+ h^T(x + x') + g^T(y + y')\right)\\
&= \exp\left((x^{+})^TJy^{+} + (x^{-})^TJy^{-} + \sqrt{2}h^Tx^{+} + \sqrt{2}g^Ty^{+}\right).
\end{align*}
Observe that $\Pr[X= x, Y= y, X' = x', Y' = y'] = \Pr[X= x', Y= y', X' = x, Y' = y] = \Pr[X= x, Y= y]\Pr[X' = x', Y' = y']$. Thus under this transformation, we have
\begin{align*}
\cov(u,v) &= \E[X_uX_v] - \E[X_u]\E[X_v] \\
&= \E_{\D}[X_u X_v] - \E_{\D}[X_uX'_v] = \E_{\D}[X'_u X'_v] - \E_{\D}[X'_uX_v]\\
&= \frac{1}{2}(\E_{\D}[X_u X_v] + \E_{\D}[X'_u X'_v] - \E_{\D}[X'_u X_v] - \E_{\D}[X_uX'_v])\\
&= \frac{1}{2}\E_{\D}[(X_u - X'_u)(X_v - V'_v)]\\
&= \E_{\D}[X^{-}_u X^{-}_v]\\
&= \frac{\sum\limits_{\substack{x,x' \in \{\pm 1\}^n\\y,y' \in \{\pm 1\}^m}}x^{-}_ux^{-}_v \exp((x^{+})^TJy^{+} + (x^{-})^TJy^{-} + \sqrt{2}h^Tx^{+} + \sqrt{2}g^Ty^{+})}{\sum\limits_{\substack{x,x' \in \{\pm 1\}^n\\y,y' \in \{\pm 1\}^m}} \exp((x^{+})^TJy^{+} + (x^{-})^TJy^{-} + \sqrt{2}h^Tx^{+} + \sqrt{2}g^Ty^{+})}.
\end{align*}
Now we will bound the numerator (N) and denominator (D) separately. Since $v \in N_2(u)$, there exists $k$ such that $J_{uk}, J_{vk} \neq 0$. Let $\gamma(x^{-}, y^{-}) = \exp((x^{-})^TJy^{-} - x^-_uJ_{uk}y^-_{k} - x^-_vJ_{vk}y^-_k)$ and $\Delta(x^+, y^+) = \exp((x^{+})^TJy^{+} +\sqrt{2}h^Tx^{+} + \sqrt{2}g^Ty^{+})$. 
We have,
\begin{align*}
N &=\sum\limits_{\substack{x,x' \in \{\pm 1\}^n\\y,y' \in \{\pm 1\}^m}}x^{-}_ux^{-}_v \exp((x^{+})^TJy^{+} + (x^{-})^TJy^{-} + \sqrt{2}h^Tx^{+} + \sqrt{2}g^Ty^{+})\\
&=\sum\limits_{\substack{x,x' \in \{\pm 1\}^n\\y,y' \in \{\pm 1\}^m}}x^{-}_ux^{-}_v\exp((x^{-}_uJ_{uk} + x^{-}_vJ_{vk})y^{-}_k) \gamma(x^{-}, y^-)\Delta(x^+, y^+)\\
&=\sum\limits_{\substack{x,x' \in \{\pm 1\}^n\\y,y' \in \{\pm 1\}^m}} \sum_{i=0}^\infty x^{-}_ux^{-}_v\left(\frac{(x^{-}_uJ_{uk} + x^{-}_vJ_{vk})^i(y^{-}_k)^i}{i!}\right)\gamma(x^{-}, y^-)\Delta(x^+, y^+)\\
&=\sum\limits_{\substack{x,x' \in \{\pm 1\}^n\\y,y' \in \{\pm 1\}^m}}\sum_{i=0}^\infty \sum_{j=0}^i \frac{1}{i!}  {i \choose j}  J_{uk}^j J_{vk}^{i - j} (x^{-}_u)^{j+1}(x^{-}_v)^{i + 1 - j}(y^{-}_k)^i \gamma(x^{-}, y^-)\Delta(x^+, y^+)
\end{align*}
The following lemma is the main observation to bound the above term, it shows that each term in the summation is non-negative.
\begin{lemma}\label{lem:pos}
For all $A \in \mathbb{Z}_+^n, B \in \mathbb{Z}_+^n$ and function $f$ over $x^+, y^+$ such that $f \ge 0$,
\[
\sum\limits_{\substack{x,x' \in \{\pm 1\}^n\\y,y' \in \{\pm 1\}^m}}\prod_{a \in [n]}(x^-_a)^{A_a} \prod_{b \in [m]}(y^-_b)^{B_b} f(x^+, y^+) \ge 0.
\]
\end{lemma}
\begin{proof}
Observe that for any $i \in [n]$,  exchanging $x_i \leftrightarrow x'_i$ does not change the summation, however it changes $x^-_i \rightarrow -x^-_i$ while leaving $x^+_i \rightarrow x^+_i$ unchanged. Thus, if $A_i$ is odd, then the summation will be 0. Therefore, for the term to be non-zero, for all $i \in [n]$, $A_i$ must be even. Similarly, for all $j \in [m]$, $B_j$ must be even. Now since $f \ge 0$ and there are only even powers, the summation must be positive.
\end{proof}
It is easy to see that $\gamma(x^-, y^-)$ can be expanded as a multivariate polynomial over $x^-, y^-$ with non-negative coefficients (since $J \ge 0$)\footnote{Since $\gamma$ is an exponential function of a polynomial with non-negative coefficients, using taylor expansion of $e^{a}$, we will overall get a polynomial with all non-negative coefficients.}. Therefore, applying Lemma \ref{lem:pos}, we have for all $i \ge j$,
\[\sum_{x,x' \in \{\pm 1\}^n; y,y' \in \{\pm 1\}^m}(x^{-}_u)^{j+1}(x^{-}_v)^{i + 1 - j}(y^{-}_k)^i \gamma(x^{-}, y^-)\Delta(x^+, y^+) \ge 0.
\]
This implies that the covariance is indeed non-negative.

Now we will show that in fact the covariance is at least a constant independent of $n$. Since all terms are non-negative, we can lower bound the numerator by the term corresponding to $i = 2$ and $j = 1$. This yields only squares of $x^-_u, x^-_v, y^-_k$ as follows,
\begin{align*}
N &\ge \sum\limits_{\substack{x,x' \in \{\pm 1\}^n\\y,y' \in \{\pm 1\}^m}} J_{uk}J_{vk}(x^{-}_u)^{2}(x^{-}_v)^{2}(y^{-}_k)^{2} \gamma(x^{-}, y^-)\Delta(x^+, y^+)\\
&\ge \alpha^2\sum\limits_{\substack{x,x' \in \{\pm 1\}^n\\y,y' \in \{\pm 1\}^m}}(x^{-}_u)^{2}(x^{-}_v)^{2}(y^{-}_k)^{2} \gamma(x^{-}, y^-)\Delta(x^+, y^+).
\end{align*}
Here the second inequality follows from noting that by our assumption $J_{uk}, J_{vk} \ne 0$ and hence must be at least $\alpha$. Lastly we bound $\gamma(x^{-}, y^-)\Delta(x^+, y^+)$. Define $L:= V_{obs} \backslash \{u,v\}$ and $R:= V_{lat} \backslash \{k\}$.  We have
\begin{align*}
&\gamma(x^{-}, y^-)\Delta(x^+, y^+)\\
& =\exp((x^{-})^TJy^{-} -x^-_uJ_{uk}y^-_{k} - x^-_vJ_{vk}y^-_k + (x^{+})^TJy^{+} +\sqrt{2}h^Tx^{+} + \sqrt{2}g^Ty^{+})\\
&= \exp\left((x^{-}_L)^TJ({L, R})y^{-}_R + (x^{+}_L)^TJ({L, R})y^{+}_R  + \sqrt{2}h_{L}^Tx^+_{L} + \sqrt{2}g_{R}y^+_{R}\right) \\
&\quad \times \exp\left(x^-_uJ({\{u\},R})y^{-}_{R} + x^+_uJ({\{u\},R})y^{+}_{R} + \sqrt{2}h_u x^+_u\right) \\
&\quad \times \exp\left(x^-_vJ({\{v\},R})y^{-}_{R} + x^+_vJ({\{v\},R})y^{+}_{R} + \sqrt{2}h_v x^+_v\right)\\
&\quad \times \exp\left(x^-_{L}J({L,\{k\}})y^{-}_k -x^-_uJ_{uk}y^-_{k} - x^-_vJ_{vk}y^-_k + x^+J({V_{obs},\{k\}})y^{+}_k + \sqrt{2}g_k y^+_k\right)
\end{align*}
Here $x_T (y_T)$ denote the restriction of $x (y)$ to all indices in $T$ and similarly $J(T_1, T_2)$ denote the sub-matrix obtained by restricting $J$ to the rows and columns indexed by $T_1, T_2$ respectively. We can show that each of the last three terms in the product can be straightforwardly bounded in $[\exp(-2\lambda), \exp(2\lambda)]$. Observe that
\begin{align*}
&\exp\left(x^-_uJ(\{u\},R)y^{-}_{R} + x^+_uJ(\{u\},R)y^{+}_{R} + \sqrt{2}h_u x^+_u\right) \\
&= \exp\left(x_uJ(\{u\},R)y_{R} + x'_uJ(\{u\},R)y'_{R} + h_u (x_u + x'_u)\right)\\
& \ge \exp\left(-2\left(\sum_{j \in R} |J_{uj}| + |h_u|\right)\right) \ge \exp(-2\lambda)
\end{align*}
Similarly we can bound $\exp\left(x^-_vJ({\{v\},R})y^{-}_{R} + x^+_vJ(\{v\},R)y^{+}_{R} + \sqrt{2}h_v x^+_v\right) \ge \exp(-2\lambda)$. As for the last term, we have
\begin{align*}
&\exp\left(x^-_{L}J(L,\{k\})y^{-}_k -x^-_uJ_{uk}y^-_{k} - x^-_vJ_{vk}y^-_k + x^+J_{V_{obs},\{k\}}y^{+}_k + \sqrt{2}g_k y^+_k\right)\\
&= \exp\left(x_{L}J(L,\{k\})y_k + x'_{L}J(L,\{k\})y'_k +x_uJ_{uk}y'_{k} + x'_uJ_{uk}y_{k} + x_vJ_{vk}y'_k + x'_vJ_{vk}y_k  + g_k (y_k + y'_k)\right)\\
&\ge \exp\left(-2\left(\sum_{i \in V_{obs}} |J_{ik}| + |g_k|\right)\right) \ge \exp(-2\lambda)
\end{align*}
Now, setting 
\[
\rho(L,R) := \sum\limits_{\substack{x_L,x'_L \in \{\pm 1\}^{|L|}\\y_R,y'_R \in \{\pm 1\}^{|R|}}}\exp\left((x^{-}_L)^TJ(L,R)y^{-}_R + (x^{+}_L)^TJ(L,R)y^{+}_R  + \sqrt{2}h_{L}^Tx^+_{L} + \sqrt{2}g_{R}y^+_{R}\right),
\]
we have
\begin{align*}
N&\ge \alpha^2\exp(-6\lambda) \rho(L,R) \sum\limits_{x_u,x'_u \in \{\pm 1\}}(x^{-}_u)^{2}\sum\limits_{x_v,x'_v \in \{\pm 1\}}(x^{-}_v)^{2}\sum\limits_{y_k,y'_k \in \{\pm 1\}}(y^{-}_k)^{2}\\
&= 2^6\exp(-6\lambda) \rho(L,R).
\end{align*}
Here the second equality follows from observing that $\sum\limits_{x_u,x'_u \in \{\pm 1\}}(x^{-}_u)^{2} = 4$ (similarly for $x^-_v$ and $y^-_k$).
 Similarly, the denominator can be bounded as follows,
\begin{align*}
D &= \sum\limits_{\substack{x,x' \in \{\pm 1\}^n\\y,y' \in \{\pm 1\}^m}}\exp\left((x^{-}_L)^TJ(L,R)y^{-}_R + (x^{+}_L)^TJ(L,R)y^{+}_R  + \sqrt{2}h_{L}^Tx^+_{L} + \sqrt{2}g_{R}y^+_{R}\right) \\
&\quad \times \exp\left(x^-_uJ(\{u\},R)y^{-}_{R} + x^+_uJ(\{u\},R)y^{+}_{R} + \sqrt{2}h_u x^+_u\right) \\
&\quad \times \exp\left(x^-_vJ(\{v\},R)y^{-}_{R} + x^+_vJ(\{v\},R)y^{+}_{R} + \sqrt{2}h_v x^+_v\right)\\
&\quad \times \exp\left(x^-J_{V_{obs},\{k\}}y^{-}_k + x^+J_{V_{obs},\{k\}}y^{+}_k + \sqrt{2}g_k y^+_k\right)\\
&\le \exp(6\lambda)\sum\limits_{\substack{x,x' \in \{\pm 1\}^n\\y,y' \in \{\pm 1\}^m}}\exp\left((x^{-}_L)^TJ(L,R)y^{-}_R + (x^{+}_L)^TJ(L,R)y^{+}_R  + \sqrt{2}h_{L}^Tx^+_{L} + \sqrt{2}g_{R}y^+_{R}\right) \\
&= 2^6 \exp(6\lambda) \rho(L,R).
\end{align*}
Combining, we have $\cov(u,v) \ge \alpha^2\exp(-12 \lambda)$.
\end{proof}
\begin{corollary}
For $u \neq v \in V_{obs}$ such that there exists $w \in V_{lat}$ with $(u,k), (v,k) \in E$ and a subset of observed nodes $S \subseteq V_{obs} \backslash \{u,v\}$, $\cov^{\avg}(u,v| X_S) \ge {\alpha^2}\exp(-12\lambda)$.
\end{corollary}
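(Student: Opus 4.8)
The plan is to obtain the Corollary as an immediate consequence of Lemma~\ref{lem:cov}, using the fact that the bound established there is \emph{uniform} over all configurations $x_S$, so that averaging over $x_S$ cannot decrease it.

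First I would line up the hypotheses. The Corollary assumes $u \neq v \in V_{obs}$, a common latent neighbour $w$ with $(u,w),(v,w) \in E$, and a set $S \subseteq V_{obs}\backslash\{u,v\}$. Since $(u,w),(v,w)\in E$ we have $v \in N_2(u)$ by the definition of the two-hop graph-theoretic neighbourhood; and since $S$ excludes both $u$ and $v$, we have $S \subseteq V_{obs}\backslash\{u\}$ together with $v \notin S$, i.e. $v \in N_2(u)\backslash S$. Hence Lemma~\ref{lem:cov}, applied with node $u$, subset $S$, and an arbitrary configuration $x_S \in \{\pm 1\}^{|S|}$, gives
\[
\cov(u,v \mid X_S = x_S) \ \ge\ \alpha^2 \exp(-12\lambda) \qquad \text{for every } x_S .
\]

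Next I would take the expectation over the marginal law of $X_S$. By definition $\cov^{\avg}(u,v\mid S) = \E_{x_S}\big[\cov(u,v\mid X_S = x_S)\big]$, and since the quantity inside the expectation is pointwise at least $\alpha^2\exp(-12\lambda)$, monotonicity of expectation yields $\cov^{\avg}(u,v\mid S) \ge \alpha^2\exp(-12\lambda)$, which is the claim.

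I do not expect any real obstacle: all of the work is already contained in Lemma~\ref{lem:cov}, whose conclusion is a deterministic lower bound holding for each fixed $x_S$, so the averaging step is purely formal. The only point that requires (minor) care is the bookkeeping checked in the first step, namely that the common-neighbour condition is inherited by the pair $(u,v)$ and that the conditioning set $S$ contains neither $u$ nor $v$, which is exactly what is assumed; were $S$ allowed to meet $\{u,v\}$ the reduction would fail, but that case does not arise here.
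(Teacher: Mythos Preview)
Your proposal is correct and matches the paper's own proof: the paper also observes that Lemma~\ref{lem:cov} gives the pointwise bound $\cov(u,v\mid X_S=x_S)\ge\alpha^2\exp(-12\lambda)$ for every configuration $x_S$, and then simply averages over $x_S$. Your extra bookkeeping verifying that $v\in N_2(u)\backslash S$ from the common-neighbour hypothesis is a welcome clarification that the paper leaves implicit.
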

\begin{proof}
Since for any $X_S = x_S$, by Lemma \ref{lem:pos}, the covariance is bounded below by ${\alpha^2}\exp(-12\lambda)$, hence the expectation is also bounded by the same quantity.
\end{proof}

\paragraph{Remark.} Observe that the above lemma also shows that $N_2(u) \subseteq N^{mkv}_2(u)$. It is not hard to see that $N^{mkv}_2(u) \subseteq N_2(u)$ by the structure of the RBM therefore $N_2(u) = N^{mkv}_2(u)$.

\paragraph{Remark.} The key structural result can be extended to the setting in which there are edges between hidden and observed variables using the same techniques, however now the bound will depend on the length of the shortest path connecting two observed nodes similar to \cite{bresler2018learning}.

\section{Algorithm}
In this section we present the main algorithm (Algorithm \ref{algo:main}) and a proof of its correctness.  Our algorithm and analysis is similar to the influence maximization algorithms for learning ising models as in \cite{bresler2015efficiently}. However, instead of maximizing influence, our algorithm exploits the key property to maximize conditional covariance. For completeness, we give the full proof. 
\begin{algorithm}\label{algo:main}
   \caption{$\textsc{LearnRBMNbhd}$ Learn 2-hop neighborhood of a node}
   \hspace*{\algorithmicindent} \textbf{Input} Samples $X^{(1)}, \ldots, X^{(M)}$, threshold $\tau$, observed node $u$ \\
    \hspace*{\algorithmicindent} \textbf{Output} Set $S$ of two-hop neighbors of $u$
\begin{algorithmic}[1]
  \STATE Set $S := \phi$
   \STATE Let $i^*, \eta^* = \argmax_v \hat{\cov}^{\avg}(u,v|S), \max_v \hat{\cov}^{\avg}(u,v|S)$
   \IF {$\eta^* \ge \tau$}
    \STATE $S = S \cup \{i^*\}$
  \ELSE 
  \STATE Go to Step 9
  \ENDIF
  \STATE Go to Step 2
   \STATE {\em Pruning step}: For each $v \in S$, if $\hat{\cov}^{\avg}(u, v| X_S) < \tau$, remove $v$
   \STATE Return $S$
\end{algorithmic}
\end{algorithm}
\begin{theorem}\label{thm:main}
Consider $M$ samples drawn from an $(\alpha, \lambda)$-locally consistent RBM, $X^{(1)}, \ldots, X^{(M)}$. For $\tau = \frac{\alpha^2}{2}\exp(-12 \lambda)$, with probability $1 - \zeta$, $\textsc{LearnRBMNbhd}(X^{(1)}, \ldots, X^{(M)}, \tau, u)$ outputs {\em exactly} the two-hop neighborhood of each observed variable $u$ as long as 
\[
M \ge \Omega\left(\left(\log(1/\zeta) + T^*\log(n)\right)\frac{2^{2T^*}}{\tau^2 \delta^{2T^*}}\right)\text{ for } T^* = \frac{8}{\tau^2}.
\]
Moreover, the algorithm runs in time $O(T^* M n)$ for each node $u$.
\end{theorem}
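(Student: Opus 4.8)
The plan is to run the greedy influence-maximization analysis of \cite{bresler2015efficiently} with conditional covariance in place of influence. Fix the observed node $u$; the argument has three ingredients: (i) a uniform concentration bound for the empirical average conditional covariances over all sets $S$ of size at most $T^*$; (ii) an entropy argument showing the greedy loop halts within $T^*$ iterations and, at that point, contains $N_2(u)$; and (iii) a correctness argument for the pruning step that rests on the identity $N_2(u)=N^{mkv}_2(u)$ from the Remark after Lemma~\ref{lem:cov}. For the concentration step: since every parameter of the RBM is bounded by $\lambda$, the conditional law of a single observed variable given all the others puts mass at least $\delta:=(1+e^{2\lambda})^{-1}$ on each sign, so $\Pr[X_S=x_S]\ge\delta^{|S|}$ for every $S$ and configuration $x_S$. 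I would show that for $M$ as in the statement the event
\[
\mathcal{E}:=\left\{\, \big|\hat{\cov}^{\avg}(u,v\mid S)-\cov^{\avg}(u,v\mid S)\big|\le \tau/2 \ \text{ for all } v\in V_{obs} \text{ and all } S,\ |S|\le T^* \,\right\}
\]
holds with probability at least $1-\zeta$. For a fixed pair $(v,S)$, write $\cov^{\avg}(u,v\mid S)=\E[X_uX_v]-\E_{x_S}[\E[X_u\mid x_S]\E[X_v\mid x_S]]$; the first term is estimated directly and the second by bucketing the $M$ samples by the value of $X_S$. A Chernoff bound shows every bucket $x_S$ gets $\Theta(M\delta^{|S|})$ samples with high probability, and conditioned on this, Hoeffding bounds the error of each empirical conditional mean; combining these (using $|\E[X_u\mid x_S]|,|\E[X_v\mid x_S]|\le 1$) gives deviation at most $\tau/2$ with failure probability $\exp(-\Omega(M\tau^2\delta^{2T^*}/2^{2T^*}))$. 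A union bound over the at most $n\cdot\binom{n}{\le T^*}\cdot 2^{T^*}\le n^{T^*+1}2^{T^*}$ relevant triples $(v,S,x_S)$, and over the $n$ choices of $u$, yields the requirement $M\ge\Omega\big((\log(1/\zeta)+T^*\log n)\,2^{2T^*}/(\tau^2\delta^{2T^*})\big)$.

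For the second ingredient, work on $\mathcal{E}$. Each time the algorithm adds $i^*$ to the current set $S$ it has $\hat{\cov}^{\avg}(u,i^*\mid S)\ge\tau$, hence $\cov^{\avg}(u,i^*\mid S)\ge\tau/2$. I would then use the elementary bound (Pinsker's inequality with test function $f(x,y)=xy$) that for $\{\pm1\}$ variables $I(X_u;X_v\mid X_S=x_S)\ge\tfrac12\cov(u,v\mid X_S=x_S)^2$; averaging over $x_S$ and applying Jensen,
\[
H(X_u\mid X_S)-H(X_u\mid X_{S\cup\{i^*\}})=I(X_u;X_{i^*}\mid X_S)\ge\tfrac12\,\cov^{\avg}(u,i^*\mid S)^2\ge\tau^2/8 .
\]
Since $0\le H(X_u\mid X_S)\le H(X_u)\le 1$, after at most $8/\tau^2=T^*$ additions the conditional entropy would be negative --- impossible --- so the loop runs at most $T^*$ times and $|S|\le T^*$ throughout, meaning Step~1 covers every set that arises. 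When the loop stops, $\hat{\cov}^{\avg}(u,v\mid S)<\tau$ for every $v\notin S$, hence $\cov^{\avg}(u,v\mid S)<3\tau/2<2\tau=\alpha^2\exp(-12\lambda)$; but the Corollary to Lemma~\ref{lem:cov} would force $\cov^{\avg}(u,v\mid S)\ge 2\tau$ for any $v\in N_2(u)\setminus S$. Therefore $N_2(u)\subseteq S$ at termination.

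For the pruning step, still on $\mathcal{E}$: the test keeps $v\in S$ iff $\hat{\cov}^{\avg}(u,v\mid X_{S\setminus\{v\}})\ge\tau$. If $v\in N_2(u)$, then $v$ still shares a latent neighbor with $u$ and $|S\setminus\{v\}|\le T^*$, so the Corollary gives $\cov^{\avg}(u,v\mid S\setminus\{v\})\ge 2\tau$ and hence $\hat{\cov}^{\avg}\ge 3\tau/2\ge\tau$: $v$ survives. If $v\in S\setminus N_2(u)$, then by the previous step $S\setminus\{v\}\supseteq N_2(u)=N^{mkv}_2(u)$, and since the marginal on the observed variables is a positive Markov random field in which the neighborhood of $u$ is exactly $N^{mkv}_2(u)$, conditioning on $X_{S\setminus\{v\}}$ makes $X_u$ independent of $X_v$; so $\cov^{\avg}(u,v\mid S\setminus\{v\})=0$ and $v$ is removed. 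Thus the output equals $N_2(u)=N^{mkv}_2(u)$ exactly. For the runtime, each of the at most $T^*$ iterations and the single pruning pass evaluates $n$ average conditional covariances, each in $O(M)$ time after an $O(MT^*)$ pre-bucketing of the samples by $X_S$, for a total of $O(T^*Mn)$ per node.

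The delicate point is the entropy argument: one must pin the covariance-to-mutual-information inequality down with a good enough constant that $T^*=8/\tau^2$ (rather than a larger multiple of $1/\tau^2$) suffices, and arrange the steps in the right order --- Step~1's union bound ranges over sets of size $\le T^*$, while the entropy argument is precisely what guarantees the greedy never leaves that regime, so the two are mutually consistent by construction rather than by accident. The concentration bookkeeping (propagating the per-bucket sample counts through a product of two empirical conditional means) is routine but is where the $\delta^{2T^*}$ factor in $M$ is born.
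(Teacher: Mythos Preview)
Your proposal is correct and follows essentially the same approach as the paper: the same concentration-then-entropy-then-pruning skeleton borrowed from \cite{bresler2015efficiently}, the same Pinsker-based inequality $I(X_u;X_v\mid X_S)\ge\tfrac12(\cov^{\avg}(u,v\mid S))^2$ yielding $T^*=8/\tau^2$, and the same use of the Corollary to Lemma~\ref{lem:cov} together with $N_2(u)=N^{mkv}_2(u)$ for correctness of termination and pruning. The only cosmetic differences are that you bucket-and-Hoeffding for concentration where the paper bounds joint-probability deviations and propagates (both routine), and you helpfully make $\delta=(1+e^{2\lambda})^{-1}$ explicit, which the paper uses but never defines.
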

\begin{proof}
The proof follows along the same lines as \cite{bresler2015efficiently}. We will first show that our estimates of conditional covariance are close to the true values with the given $M$ samples. We will then show that after $T$ iterations, set $S$ contains a superset of the two-hop neighbors. Lastly we will show that our refining step removes all nodes except the two-hop neighbors. This will complete our proof.

\paragraph{Closeness of Estimates.} Denote by $\mathcal{A}(l,\epsilon)$ the event such that for all $u,v$ and $S$ with $|S| \le l$, simultaneously, $\left|\hat{\cov}^{\avg}(u,v|S) - \cov^{\avg}(u,v|S)\right| \le \epsilon$.
\begin{lemma}\label{lem:close}
For fixed $l, \epsilon, \zeta \ge 0$, if the number of samples is $\Omega\left(\left(\log(1/\zeta) + l\log(n)\right)\frac{2^{2l}}{\epsilon^2 \delta^{2l}}\right)$.
then $\Pr[A(l,\epsilon)] \geq 1 - \zeta$.
\end{lemma}
We defer the proof of the above lemma to the appendix. Choosing $M = \Omega\left(\left(\log(1/\zeta) + T^*\log(n)\right)\frac{2^{2T}}{\tau^2 \delta^{2l}}\right)$, we have $A:= A(T^*, \tau/2)$ holds for $T^* = 8/\tau^2$ with probability $1- \zeta$. From now om we assume $A$ holds.

\paragraph{Entropy Gain.} We will show that the conditional mutual information is bounded below by a function of the average conditional covariance thus at each iteration of the algorithm we are increasing the overall entropy of $X_u$.
\begin{lemma}
For $u \neq v \in V_{obs}$ and a subset of observed nodes $S \subseteq V_{obs} \backslash \{u,v\}$ with configuration $x_S$,
\[
\sqrt{2I(X_u;X_v|X_S)} \ge \cov^{\avg}(u,v|S)
\]
\end{lemma}
\begin{proof}
We have
\begin{align*}
\sqrt{2I(X_u;X_v|X_S)} &= \sqrt{\E_{x_S}[2I(X_u;X_v|X_S = x_S)]} \\
&\geq \E_{x_S}[\sqrt{2I(X_u;X_v|X_S = x_S)}] \\
&= \E_{x_S}[\sqrt{2D_{KL}(\Pr(X_u,X_v|X_S = x_S)||\Pr(X_u|X_S = x_S)\Pr(X_v|X_S = x_S))}]\\
&\geq 2 \E_{x_S}[D_{TV}(\Pr(X_u,X_v|X_S = x_S)||\Pr(X_u|X_S = x_S)\Pr(X_v|X_S = x_S))]\\
&= \E_{x_S}\left[\sum_{x_u,x_v \in \{\pm 1\}}\left|\Pr(X_u = x_u,X_v = x_v|X_S = x_S) \right.\right.\\
&\qquad\qquad\qquad \left.\left.- \Pr(X_u = x_u|X_S = x_S)\Pr(X_v = x_v|X_S = x_S)\right|\right]\\
&= \E_{x_S}\left[\sum_{x_u,x_v \in \{\pm 1\}}\left|x_ux_v\Pr(X_u = x_u,X_v = x_v|X_S = x_S) \right.\right.\\
&\qquad\qquad\qquad \left.\left.- x_u\Pr(X_u = x_u|X_S = x_S)x_v\Pr(X_v = x_v|X_S = x_S)\right|\right]\\
&\ge \E_{x_S}\left[\sum_{x_u,x_v \in \{\pm 1\}}(x_ux_v\Pr(X_u = x_u,X_v = x_v|X_S = x_S) \right.\\
&\qquad\qquad\qquad \left.- x_u\Pr(X_u = x_u|X_S = x_S)x_v\Pr(X_v = x_v|X_S = x_S))\right]\\
&= \E_{x_S}\left[\E[X_uX_v|X_S = x_S] - \E[X_u|X_S = x_S]\E[X_v|X_S = x_S]\right]\\
&= \E_{x_S}\left[\cov(u,v|X_S = x_S)\right] = \cov^{\avg}(u,v|S).
\end{align*}
Here the first inequality follows using Jensen's and the second inequality follows from the Pinsker's inequality and the rest follow from simple algebraic manipulations.
\end{proof}

\paragraph{Upper Bound on Size of $S$.} We will show that $|S| \le T^*$. Let the sequence of added nodes be $i_1, \ldots, i_T$ for some $T$ and $S_l = \{i_1, \ldots, i_l\}$ for $1 \le l \le T$. For each $j \in T$, we have $\hat{\cov}^{\avg}(u;i_j|X_{S_j}) \ge \tau$ (by Step 3). If $T \ge T^*$, then we have $\cov^\avg(u;i_j|X_{S_j}) \ge \tau/2$ for all $j \le T^* + 1$ (since $A$ holds). Thus we have,
\[
1 \ge H(X_u) \ge I(X_u|X_S) = \sum_{j=1}^TI(X_u;X_{i_j}|S_{j-1}) \ge \frac{T^* + 1}{8}\tau^2.
\]
Here the inequalities follow from standard properties of entropy and mutual information. This leads to a contradiction since $T^* = \frac{8}{\tau^2}$. Thus, we have $T \le T^*$. Observe that each iteration requires $O(Mn)$ time and at most $T^*$ iterations take place prior to pruning. Also pruning takes $O(Mn)$ time, giving us a total runtime of $O(T^*Mn)$.

\paragraph{Recovery of Two-hop Neighborhood.}  We will show that $N_2(u) \subseteq S$. Suppose $N_2(u) \not \subseteq S$, then there exists $v \in N_2(u)$. By Lemma \ref{lem:cov}, we know that $\cov^\avg(u,v|X_S) \ge {\alpha^2}\exp(-12\lambda) = 2 \tau$. Since $A$ holds and $|S| \le 8/\tau^2$, we have $\hat{\cov}^\avg(u,v|X_S) \ge 3\tau/2$, thus the algorithm would not have terminated. This is a contradiction, thus $N_2(u) \subseteq S$ before pruning.

Now if $v \not\in N_u(S)$ then $\cov(u, v|X_{S \backslash \{v\}}) = 0$ since conditional on the 2-hop neighborhood, $X_u$ and $X_v$ are independent, therefore they will be removed. Whereas, by Lemma \ref{lem:cov}, if $v \in N_u(S)$ then $\cov(u, v|X_{S \backslash \{v\}}) \ge 2\tau$ and our test will not remove it (estimates of covariance are correct withing $\alpha/2$). Thus we will exactly obtain the neighborhood at the end of the algorithm.
\end{proof}

\section{Hardness of Learning General RBMs}
In this section we will discuss why our model does not violate the hardness result stated in \cite{bresler2018learning}. The hardness result in the paper reduces the problem of learning sparse parities with noise over the uniform distribution to the problem of structure recovery of a RBM. 

Suppose $S \subseteq [n]$ is the subset on which the parity problem is defined. The main technique used for the reduction is the observation from \cite{klivans2017learning} that the joint distribution on the input and noisy parity $(x,y)$ can be represented as a single term MRF (term $y\prod_{i \in S}x_i$). Further \cite{bresler2018learning} showed that every MRF can be represented as a RBM with sufficiently many hidden units. Here we show that even if the external fields are arbitrary, any ferromagnetic RBM when expressed as an MRF has pairwise potentials for every two-hop neighbor pair. This implies that it cannot represent the MRF corresponding to the noisy parity.

\begin{lemma}[\cite{bresler2018learning}]
Given a RBM, with $\rho(a) = \log(\exp(a) + \exp(-a))$, we have,
\[Pr[X= x] =  \frac{1}{Z} \exp\left(\sum_{j=1}^m\rho(x^TJ({V_{obs}, \{j\}}) + g_j) + h^Tx\right).\] 
\end{lemma}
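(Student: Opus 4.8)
The plan is to obtain the claimed identity by directly marginalizing the latent variables $Y$ out of the joint distribution, exploiting the defining feature of an RBM that the latent variables are conditionally independent given $X$ (the underlying graph is bipartite, so there are no edges inside $V_{lat}$ and hence no $Y_iY_j$ cross terms). Concretely, I would start from
\[
\Pr[X = x] = \sum_{y \in \{\pm 1\}^m} \Pr[X = x, Y = y] = \frac{1}{Z} \sum_{y \in \{\pm 1\}^m} \exp\left(x^T J y + h^T x + g^T y\right).
\]

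Next I would pull the factor $\exp(h^T x)$, which does not depend on $y$, outside the sum, and rewrite the remaining exponent as an additive sum of independent contributions over the hidden coordinates,
\[
x^T J y + g^T y = \sum_{j=1}^m y_j \left( x^T J(V_{obs}, \{j\}) + g_j \right).
\]
Because this exponent is additive in the $y_j$, the exponential factorizes as a product over $j$, and the sum over $y \in \{\pm 1\}^m$ splits into a product of one-dimensional sums:
\[
\sum_{y \in \{\pm 1\}^m} \exp\left(\sum_{j=1}^m y_j a_j\right) = \prod_{j=1}^m \sum_{y_j \in \{\pm 1\}} \exp(y_j a_j) = \prod_{j=1}^m \left(\exp(a_j) + \exp(-a_j)\right),
\]
where $a_j := x^T J(V_{obs}, \{j\}) + g_j$.

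Finally I would rewrite each factor $\exp(a_j) + \exp(-a_j)$ as $\exp(\rho(a_j))$ using the definition $\rho(a) = \log(\exp(a) + \exp(-a))$, which turns the product into $\exp\left(\sum_{j=1}^m \rho\!\left(x^T J(V_{obs}, \{j\}) + g_j\right)\right)$; reinserting the $\exp(h^T x)$ factor and the normalization $1/Z$ yields exactly the stated expression. I do not anticipate any real obstacle: the only structural input is bipartiteness of the RBM, which makes the marginalization decouple across hidden units, and the remaining work is the routine algebraic rearrangement of the exponent described above.
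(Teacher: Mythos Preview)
Your proposal is correct and is the standard derivation of this identity. The paper itself does not supply a proof of this lemma at all --- it is stated as a cited result from \cite{bresler2018learning} --- so there is nothing to compare against; your direct marginalization over the hidden units, using bipartiteness to factorize the sum, is exactly the argument one would expect.
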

Let us look at the potential corresponding to $k \in V_{lat}$, $\rho(x^TJ_{V_{obs}, \{k\}} + g_k)$. We will show that when you expand the term over the monomial basis, the coefficient corresponding to $x_ix_j$ for any $i,j \in V_{obs}$ is non-negative and the coefficient corresponding to $x_ux_v$ for $u,v \in V_{obs}$ such that $k \in N(u) \cap N(v)$ is strictly positive. More formally,
\begin{lemma} \label{lem:parity}
$f(x) = \sum_{j=1}^m\rho(x^TJ({V_{obs}, \{j\}}) + g_j) + h^Tx$ when expressed in the monomial basis with coefficients $\hat{f}_S$ for every monomial $S$ satisfies: $\hat{f}_{\{i,j\}} \ge 0$ for all $i,j \in V_{obs}$, moreover, $\hat{f}_{\{i,j\}} > 0$ for $i, j$ such that $i \in N_2(j)$.
\end{lemma}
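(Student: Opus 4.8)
The plan is to pass to the Fourier (multilinear) expansion over $\{\pm1\}^n$. Writing $f(x)=\sum_S\hat f_S\prod_{i\in S}x_i$ with $\hat f_S=\E_x[f(x)\prod_{i\in S}x_i]$ (expectation over $x$ uniform on $\{\pm1\}^n$), I only need to analyse $\hat f_{\{u,v\}}=\E_x[f(x)x_ux_v]$ for distinct $u,v\in V_{obs}$. The linear term $h^Tx$ is multilinear of degree one, so it contributes nothing; by linearity $\hat f_{\{u,v\}}=\sum_{j=1}^m\E_x\big[\rho(x^TJ(V_{obs},\{j\})+g_j)x_ux_v\big]$. If $J_{uj}=0$ (or $J_{vj}=0$) the $j$-th term does not depend on $x_u$ (resp.\ $x_v$), so integrating that coordinate first annihilates it. Hence only the latent nodes $k$ with $J_{uk}\ne0\ne J_{vk}$ survive, and it suffices to prove that each surviving summand is strictly positive: then $\hat f_{\{u,v\}}$ is an empty sum, hence $0$, exactly when $u\notin N_2(v)$, and a sum of strictly positive numbers, hence strictly positive, exactly when $u\in N_2(v)$. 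By the WLOG normalisation recorded in the Preliminaries I may assume $J\ge0$; more generally local consistency already forces $J_{1k},\dots,J_{nk}$ to share a sign, which can be flipped to nonnegative by relabelling $Y_k$ without changing the marginal on $X$.

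Fix such a $k$ and set $a_i:=J_{ik}\ge0$. Conditioning on all observed coordinates other than $u,v$ and writing $z:=\sum_{i\ne u,v}a_ix_i+g_k$, which is independent of $x_u,x_v$, a direct computation over the four values of $(x_u,x_v)$ gives
\[
\E_{x_u,x_v}\big[\rho(a_ux_u+a_vx_v+z)\,x_ux_v\big]=\tfrac14\big(F(a_u,a_v)-F(a_u,-a_v)-F(-a_u,a_v)+F(-a_u,-a_v)\big),
\]
where $F(s,t):=\rho(z+s+t)$. This is a mixed second-order difference, and since $\rho(a)=\log(e^a+e^{-a})$ has $\rho''(a)=1-\tanh^2a>0$ (i.e.\ $\rho$ is strictly convex) it equals $\tfrac14\int_{-a_u}^{a_u}\int_{-a_v}^{a_v}\rho''(z+s+t)\,dt\,ds>0$, the strictness coming exactly from $a_u,a_v>0$, i.e.\ from $J_{uk},J_{vk}\ne0$. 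If one prefers to avoid calculus, the same conclusion follows by noting that $p\mapsto\rho(z+p+a_v)-\rho(z+p-a_v)=[\rho(z+p+a_v)-\rho(z+p)]+[\rho(z+p)-\rho(z+p-a_v)]$ is a sum of two strictly increasing functions of $p$ (each an increment of a strictly convex function over a step of positive length), so its value at $a_u$ strictly exceeds its value at $-a_u$. Taking the remaining expectation over $x_{V_{obs}\setminus\{u,v\}}$ preserves strict positivity, so the $k$-th summand is strictly positive.

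Combining the two paragraphs proves both claims: $\hat f_{\{u,v\}}\ge0$ for all distinct $u,v\in V_{obs}$, with $\hat f_{\{u,v\}}>0$ whenever $u\in N_2(v)$. I do not expect a serious obstacle here; the only real content is recognising the degree-two coefficient as a mixed second difference of $\rho$ and invoking the strict convexity of $\rho=\log(2\cosh(\cdot))$ (a soft, one-line analogue of the Griffiths-type second-derivative inequalities), plus the bookkeeping that discards the irrelevant latent nodes and normalises signs to $J\ge0$. It is worth flagging that the lower bound $\alpha$ on nonzero weights is not used — only that the relevant weights are nonzero — which is consistent with the lemma asking merely for positivity rather than a quantitative bound.
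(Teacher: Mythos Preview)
Your proposal is correct and follows the same skeleton as the paper's proof: express the degree-two Fourier coefficient as a sum over latent nodes, and for each latent node average over the four values of $(x_u,x_v)$ to obtain a four-term expression. The difference lies in how the four-term expression is shown to be nonnegative. The paper plugs in $\rho(a)=\log(e^a+e^{-a})$ explicitly, combines the four terms into a single $\log$-ratio, and then compares $\cosh(J_{uk}+J_{vk})$ with $\cosh(J_{uk}-J_{vk})$ using that $\cosh$ is increasing in $|\cdot|$ and $|J_{uk}+J_{vk}|\ge|J_{uk}-J_{vk}|$ when $J\ge0$. You instead recognise the four-term expression as a mixed second difference of $\rho$ and invoke only the strict convexity $\rho''=1-\tanh^2>0$, via the double integral $\int\!\!\int\rho''>0$. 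Your argument is slightly more conceptual (it uses only $\rho''>0$, not the closed form of $\rho$) and you also make explicit that terms with $J_{uk}=0$ or $J_{vk}=0$ vanish; the paper's computation handles those implicitly, since then $|J_{uk}+J_{vk}|=|J_{uk}-J_{vk}|$ and the $\log$-ratio is zero.
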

We defer the proof of the above lemma to the appendix. Since we sum such potentials, this positive coefficient cannot be canceled and $f$ cannot represent the parity MRF as in the reduction. This raises the question of understanding the exact class of RBMs for which the hardness results truly holds.

\section{Conclusions and Open Problems}
In this work we presented a key structural property of locally consistent RBMs with arbitrary external fields and subsequently showed how to use this property to iteratively build the two-hop neighborhood of each node. Our algorithm runs in optimal time and sample complexity in terms of the dimension however pays doubly exponentially in the upper bound on the weights. This seems to be an artifact of the approach of maximizing influence in general whereas algorithms using convex optimization are able to avoid this dependence for fully-observed graphical models. A natural open question is to improve this dependency potentially using tools from convex optimization. Alternatively, proving a stronger structural result such as weak-submodularity could lead to the currect dependency. More broadly, understanding the most expressive class of RBMs that allow efficient structure learning while not violating the hardness result is a worthwhile future direction to pursue.

\paragraph*{Acknowledgements.} The author would like to thank Sumegha Garg and Jessica Hoffmann for comments on the initial draft, and Adam Klivans, Frederic Koehler and Josh Vekhter for useful discussions. 

\bibliographystyle{plain}
\bibliography{references}
\appendix

\section{Omitted Proofs}
\begin{proof}[Proof of Lemma \ref{lem:close}]
The proof follows essentially from \cite{bresler2015efficiently}. Let $m$ denote the number of samples. Using standard concentration inequalities, we know that for any subset $W \subseteq V_{obs}$ and configuration $x_W \in \{\pm\}^{|W|}$, we have
\[
\Pr(|\hat{\Pr}(X_W = x_W) - \Pr(X_W - x_W)| \ge \gamma) \le 2 \exp(-2\gamma^2m).
\]
We need the above to hold over all possible choices of $W$ and $x_W$ with $|W| \le l+2$. There are at most $\sum_{k = 1}^{l+2}2^k{n \choose k} \le (l+2)(2n)^{l+2}$ many choices. Thus for $m \ge \frac{\log(2(l+2)) + \log(1/\zeta) + (l+2)\log(2n)}{2 \gamma^2}$, with probability, $1 - \zeta$, for all $W$ and $x_W$ with $|W| \le l+2$, we have $|\hat{\Pr}(X_W = x_W) - \Pr(X_W - x_W)| \le \gamma$.

Now assuming that the above is true, we will show that $\left|\hat{\cov}^{\avg}(u,v|S) - \cov^{\avg}(u,v|S)\right|$ is bounded for all $|S| \le l$. We have
\begin{align*}
 &\left|\hat{\cov}^{\avg}(u,v|S) - \cov^{\avg}(u,v|S)\right| \\
 &=
 \left|\hat{\E}_{x_S}\left[\hat{\E}[X_uX_v|X_S = x_s] - \hat{\E}[X_u|X_S = x_S]\hat{\E}[X_v| X_s = x_S]\right] \right.\\
 &\qquad \left.- \E_{x_S}\left[\E[X_uX_v|X_S = x_s] - \E[X_u|X_S = x_S]\E[X_v| X_s = x_S]\right]\right|\\
  &=\left|\sum_{x_u,x_v}x_ux_v\left(\hat{\E}_{x_S}\left[\hat{\Pr}[X_u = x_u, X_v = x_v|X_S = x_s] - \hat{\Pr}[X_u = x_u|X_S = x_S]\hat{\Pr}[X_v = x_v| X_s = x_S]\right] \right.\right.\\
 &\qquad \left.\left.- \E_{x_S}\left[\Pr[X_u = x_u, X_v = x_v|X_S = x_s] - \Pr[X_u = x_u|X_S = x_S]\Pr[X_v = x_v| X_s = x_S]\right]\right)\right|\\
 &\le \sum_{x_u,x_v, x_S}\left|\left[\hat{\Pr}[X_u = x_u, X_v = x_v, X_S = x_s] - \hat{\Pr}[X_u = x_u, X_S = x_S]\hat{\Pr}[X_v = x_v| X_s = x_S]\right] \right.\\
 &\qquad \left.- \Pr[X_u = x_u, X_v = x_v, X_S = x_s] - \Pr[X_u = x_u, X_S = x_S]\Pr[X_v = x_v| X_s = x_S]\right|\\
 &= 2^{|S|+2} \gamma + \sum_{x_u,x_v, x_S}\left|\hat{\Pr}[X_u = x_u, X_S = x_S]\hat{\Pr}[X_v = x_v| X_s = x_S] - \Pr[X_u = x_u, X_S = x_S]\Pr[X_v = x_v| X_s = x_S]\right|
\end{align*}
The second term can be bounded as follows,
\begin{align*}
&\left|\hat{\Pr}[X_u = x_u, X_S = x_S]\hat{\Pr}[X_v = x_v| X_s = x_S] - \Pr[X_u = x_u, X_S = x_S]\Pr[X_v = x_v| X_s = x_S]\right|\\
& \le \hat{\Pr}[X_v = x_v| X_s = x_S]\left|\hat{\Pr}[X_u = x_u, X_S = x_S] - \Pr[X_u = x_u, X_S = x_S]\right| \\
&\qquad + \Pr[X_u = x_u, X_S = x_S]\left|\hat{\Pr}[X_v = x_v| X_s = x_S] - \Pr[X_v = x_v| X_s = x_S]\right|\\
&\le \gamma + \left|\frac{\hat{\Pr}[X_v = x_v, X_s = x_S]}{\hat{\Pr}[X_S = x_S]} - \frac{\Pr[X_v = x_v, X_s = x_S]}{\Pr[X_S = x_S]}\right|\\
&\le \gamma + \left|\frac{\hat{\Pr}[X_v = x_v, X_s = x_S]}{\hat{\Pr}[X_S = x_S]} - \frac{\Pr[X_v = x_v, X_s = x_S]}{\hat{\Pr}[X_S = x_S]}\right| + \left|\frac{\Pr[X_v = x_v, X_s = x_S]}{\hat{\Pr}[X_S = x_S]} - \frac{\Pr[X_v = x_v, X_s = x_S]}{\Pr[X_S = x_S]}\right|\\
&\le \gamma + \frac{\gamma}{\delta^{|S|} - \gamma} + \frac{\gamma}{\delta^{|S|}}.
\end{align*}
Choosing $\gamma \le \epsilon 2^{-l}\frac{\delta^l}{20}$, we get,
\begin{align*}
\left|\hat{\cov}^{\avg}(u,v|S) - \cov^{\avg}(u,v|S)\right| &\le 2^{|S|+2}\left(2\gamma +  \frac{\gamma}{\delta^{|S|} - \gamma} + \frac{\gamma}{\delta^{|S|}} \right) \le \epsilon.
\end{align*}
Thus, we have $\Pr[A(l,\epsilon)] \ge 1 - \zeta$ for $m = \Omega\left(\left(\log(1/\zeta) + l\log(n)\right)\frac{2^{2l}}{\epsilon^2 \delta^{2l}}\right)$
\end{proof}

\begin{proof}[Proof of Lemma \ref{lem:parity}]
 Let $c_ij$ be the coefficient corresponding to $x_ix_j$ and $L = V_{obs} \backslash \{i,j\}$, then using standard fourier expansion, we have
\begin{align*}
c_{ij} &= \sum_{x \in \{\pm 1\}^n}\rho(x^TJ_{V_{obs}, \{k\}} + g_k)x_ix_j\\
&= \sum_{x_{L} \in \{\pm 1\}^{|L|}}\left(\rho(x^TJ_{L, \{k\}} + J_{ik} + J_{jk} + g_k) + \rho(x^TJ_{L, \{k\}} - J_{ik} - J_{jk} + g_k) \right.\\
&\qquad \qquad\left.- \rho(x^TJ_{L, \{k\}} - J_{ik} + J_{jk} + g_k) - \rho(x^TJ_{L, \{k\}} + J_{ik} - J_{jk} + g_k)\right)\\
&= \sum_{x_{L} \in \{\pm 1\}^{|L|}} \log\left(\frac{\exp(2x^TJ_{L, \{k\}} + 2g_k) + \exp(-2x^TJ_{L, \{k\}} - 2g_k) + \exp(2J_{ik} + 2J_{jk}) + \exp(-2J_{ik} - 2J_{jk})}{\exp(2x^TJ_{L, \{k\}} + 2g_k) + \exp(-2x^TJ_{L, \{k\}} - 2g_k) + \exp(2J_{ik} - 2J_{jk}) + \exp(-2J_{ik} + 2J_{jk})}\right)
\end{align*}
Observe that $\exp(a) + \exp(-a)$ is an increasing function of $|a|$, since $|J_{ik} + J_{jk}| \ge |J_{ik} - J_{jk}|$ ($J \ge 0$), therefore $\exp(2J_{ik} + 2J_{jk}) + \exp(-2J_{ik} - 2J_{jk}) \ge \exp(2J_{ik} - 2J_{jk}) + \exp(-2J_{ik} + 2J_{jk})$. Thus the above term in non-negative. Also notice if $J_{ik}, J_{jk} > 0$ then the sum is strictly greater than 0. Thus we have the desired property for $c_{ij}$.
\end{proof}
\end{document}